\documentclass[11pt,a4paper]{article}

\usepackage[margin=1in]{geometry}
\usepackage{amsmath}
\usepackage{amssymb}
\usepackage{amsthm}
\usepackage{mathtools}
\usepackage{bm}
\usepackage{graphicx}
\usepackage{booktabs}
\usepackage{array}
\usepackage{multirow}
\usepackage{microtype}
\usepackage{enumitem}
\usepackage{algorithm}
\usepackage{algpseudocode}
\usepackage{float}
\usepackage{xcolor}
\usepackage{makecell}
\usepackage{authblk}

\usepackage{hyperref}
\hypersetup{
  colorlinks  = true,
  linkcolor   = blue!70!black,
  citecolor   = green!50!black,
  urlcolor    = magenta!70!black,
  pdfauthor   = {Silvano Coletti, Francesca Fallucchi},
  pdftitle    = {Graph Vector Field: A Unified Framework for
                 Multimodal Health Risk Assessment},
  pdfsubject  = {Discrete differential geometry, simplicial complexes,
                 multimodal health risk, mixture-of-experts},
  pdfkeywords = {GVF, Helmholtz-Hodge decomposition, simplicial complex,
                 discrete exterior calculus, mixture-of-experts,
                 wearable health monitoring}
}
\usepackage{cleveref}

\theoremstyle{plain}
\newtheorem{theorem}{Theorem}[section]
\newtheorem{proposition}[theorem]{Proposition}

\theoremstyle{definition}
\newtheorem{definition}[theorem]{Definition}

\theoremstyle{remark}
\newtheorem{remark}[theorem]{Remark}

\newcommand{\R}{\mathbb{R}}
\newcommand{\K}{\mathcal{K}}
\newcommand{\eps}{\varepsilon}

\newcommand{\gradK}{\nabla_{\!\K}}
\newcommand{\divK}{\mathrm{div}_{\K}}
\newcommand{\curlK}{\mathrm{curl}_{\K}}
\newcommand{\bundleE}{\varepsilon(\K)}
\newcommand{\sect}{\Gamma}
\newcommand{\DPS}{\mathrm{DPS}}
\newcommand{\CRI}{\mathrm{CRI}}
\newcommand{\dGH}{d_{\mathrm{GH}}}
\newcommand{\phys}{\mathrm{phys}}
\newcommand{\behav}{\mathrm{beh}}
\newcommand{\env}{\mathrm{env}}
\newcommand{\ext}{\mathrm{ext}}

\begin{document}

%% --- Title --------------------------------------------------
\title{Graph Vector Field: A Unified Framework for
       Multimodal Health Risk Assessment from Heterogeneous
       Wearable and Environmental Data Streams}

%% --- Authors (authblk) --------------------------------------
\author[1,2]{Silvano Coletti}
\author[1]{Francesca Fallucchi}
\affil[1]{Universit\`{a} degli Studi Guglielmo Marconi,
  Roma, Italy}
\affil[2]{Chelonia SA, Lugano, Switzerland}

\date{\today}

\maketitle

%% --- Abstract -----------------------------------------------
\begin{abstract}
Digital health research has advanced dynamic graph-based disease
models, topological learning on simplicial complexes, and multimodal mixture-of-experts
architectures, but these strands remain largely
disconnected and do not yet support mechanistically interpretable
multimodal risk dynamics.
We propose a \textbf{Graph Vector Field (GVF)} framework that
models health risk as a vector-valued field evolving on
time-varying graphs and simplicial complexes, and that couples
discrete differential-geometric operators with modality-structured
mixture-of-experts.
Individuals, physiological subsystems, and their higher-order
interactions form the cells of an underlying complex; risk is
represented as a vector-valued cochain whose temporal evolution is
parameterised with Hodge Laplacians and discrete exterior calculus
operators.
This yields a Helmholtz--Hodge-type decomposition of risk dynamics
into potential-driven (exact), circulation-like (coexact), and
topologically constrained (harmonic) components, providing a direct
link between model terms and interpretable mechanisms of
propagation, cyclic behaviour, and persistent modes.
Multimodal inputs from wearable sensors, behavioural and
environmental context, and clinical or genomic data are
incorporated through a bundle-/sheaf-inspired mixture-of-experts
architecture, in which modality-specific latent spaces are attached
as fibres to the base complex and coupled via learned consistency
maps.
This construction separates modality-specific from shared
contributions to risk evolution and offers a principled route
toward modality-level identifiability.
GVF integrates geometric dynamical systems,
higher-order topology (enforced indirectly via geometric
regularisation and Hodge decomposition rather than through
the expert forward pass), and structured multimodal fusion
into a single framework for interpretable, modality-resolved
risk modelling in longitudinal cohorts.
The present paper develops the mathematical foundations,
architectural design, and formal guarantees of the framework;
empirical validation on real-world cohorts is the subject of
ongoing work.
\end{abstract}

\smallskip
\noindent\textbf{Keywords:}
Helmholtz--Hodge decomposition,
simplicial complex,
discrete exterior calculus,
mixture-of-experts,
multimodal health risk,
wearable sensing,
vector bundle

\smallskip
\noindent\textbf{MSC (2020):}
Primary 68T07;
Secondary 55N31, 58A15, 62R40, 92C50

%% ============================================================
\section{Introduction}
\label{sec:intro}
%% ============================================================

The proliferation of wearable devices and connected sensors is
transforming how health can be monitored and modelled.
Consumer- and medical-grade wearables now act, \textit{de facto}, as
pervasive antennas that continuously transmit high-frequency
physiological and behavioural signals, and are increasingly
complemented by electronic health records (EHR), environmental
exposures, and genomic information in large longitudinal
cohorts~\cite{Wang2025allofus,Zhang2026}.
Together, these data streams promise a shift from episodic assessment
to continuous, multimodal characterisation of individual health
trajectories and risk.

Current artificial intelligence (AI) methods already exploit parts of
this ecosystem: spatiotemporal graph neural networks on wearable-derived
skeletons and sensor networks for injury risk and health
warnings~\cite{Xie2025,Li2025,Fang2025}; graph-based models for
long-term stress, sleep, and resilience
monitoring~\cite{Wang2023,Biswas2026}; dynamic diffusion processes on
comorbidity graphs learned from EHR~\cite{Qian2020}; geometric
regression on data-driven simplicial complexes using Hodge
Laplacians~\cite{Gajer2025}; and large-scale multimodal fusion or
mixture-of-experts (MoE) architectures that integrate EHR, wearables,
and imaging~\cite{Wang2025allofus,Zhang2026,Wang2025moe}.
However, these advances remain largely fragmented.
Most models treat risk as an unstructured scalar output or as an
abstract latent representation in Euclidean space, provide limited
mechanistic insight into how risk propagates over physiological,
behavioural, and environmental structures, and do not explicitly
disentangle the contributions of heterogeneous modalities.

This limitation becomes evident in scenarios where individuals share
similar scalar risk scores but differ profoundly in how risk emerges
and propagates through their surrounding networks.
For one person, elevated risk may be driven by a directed flow from a
specific environmental exposure, suggesting source suppression as a
primary intervention.
For another, comparable aggregate risk may arise from a
self-reinforcing cycle involving wearables, behaviour, and context,
where no single source can be isolated and cycle-breaking interventions
are required instead.
Under scalar or unstructured latent models these cases are effectively
indistinguishable, even though they are clinically and mechanistically
distinct.

GVF is designed to address these gaps.
It models health risk as a vector-valued field evolving on
time-varying simplicial complexes, uses discrete
differential-geometric operators (including Hodge--Laplacians) to
structure and interpret the dynamics of this field, and embeds
multimodal inputs in a bundle-structured mixture-of-experts
architecture that separates modality-specific from shared effects.
Our contributions are threefold: we provide a geometric
and topological substrate for dynamic risk fields that extends
graph-based diffusion models to higher-order domains; we define a
Hodge-theoretic decomposition of learned risk flows into components
with distinct clinical interpretations, naturally linked to different
classes of input modality; and we propose a differentiable,
bundle-structured expert architecture that jointly learns risk vectors
and flow decomposition over a data-driven simplicial complex
(constructed via threshold-based filtration guided by persistent
homology, not learned end-to-end) while explicitly organising
modality-specific and shared contributions.

%% ============================================================
\section{Evolution of AI for Multimodal Health Risk Modelling}
\label{sec:related}
%% ============================================================

This section reviews recent advances in AI for analysing multimodal
health data, with an emphasis on (i)~wearable-centric risk prediction,
(ii)~dynamic graph models of disease progression,
(iii)~geometric and topological learning, and
(iv)~multimodal fusion and mixture-of-experts architectures,
focusing on how these strands relate to modelling risk dynamics,
geometric and topological structure, and modality-aware
interpretability.

\subsection{Graph- and sequence-based models for wearable-driven risk}

A first line of work uses wearable sensors as the primary data source
and combines temporal models with graph architectures.
Spatio-temporal graph neural networks (ST-GNNs) have been applied to
athletic injury risk prediction by mapping multimodal wearable
signals---IMU, sEMG, PPG---onto a dynamic human skeleton graph, where
nodes represent joints and edges encode biomechanical
relationships~\cite{Xie2025}.
The ST-GNN captures coupled spatiotemporal patterns in joint
kinematics, and federated/meta-learning supports personalised
adaptation~\cite{Xie2025}.
Similarly, hybrid LSTM--GNN models over flexible sensor networks have
been proposed for real-time health warnings in college sports,
integrating multiple physiological time series in a lightweight,
edge-computing setting~\cite{Li2025}.
Injury risk prediction frameworks that transform temporal training data
into image-like encodings (e.g., Gramian Angular Fields, Markov
Transition Fields) and then to graphs also exploit parallel graph and
temporal convolutions to capture spatiotemporal structure, supplemented
by attention mechanisms to highlight important athlete interactions and
training-load features~\cite{Fang2025}.
Beyond sports, personalised graph-attention architectures have been used
to track long-term changes in maternal sleep and stress from smartwatch
and mobile-application data, producing individualised abnormality scores
over time~\cite{Wang2023}.
For mental health, participant-specific graphs where physiological
modalities---heart activity, electrodermal activity, motion---are nodes
and edges encode inter-signal dependencies have been used with GNNs to
classify resilience and stress states, revealing physiological
differences between resilience groups~\cite{Biswas2026}.
GNNs have been applied more broadly to EHR data~\cite{Choi2017}, brain
connectivity~\cite{Ktena2018}, and epidemiological
networks~\cite{Mossel2012}.

Collectively, these works demonstrate the value of graph-based and
sequence-based models for wearable-driven risk and state
monitoring~\cite{Xie2025,Li2025,Wang2023,Biswas2026,Fang2025}.
However, they generally implement dynamics implicitly through recurrent
or spatiotemporal neural architectures rather than via explicit operators
on a structured domain, treat risk as a scalar output or unstructured
latent representation, and do not exploit higher-order topological
structure or differential-geometric operators for mechanistic
interpretation.
Dynamic GNNs~\cite{Pareja2020,Xu2020,Li2018} handle time-varying graphs
but produce scalar outputs on pairwise (1-skeleton) graphs, where the
curl operator vanishes identically and cyclic risk dynamics are invisible
by construction.

\subsection{Dynamic graphs for disease progression}

A complementary line of research focuses on disease progression as a
dynamic process on clinical graphs.
Deep Diffusion Processes (DDP) model personalised, time-varying
comorbidity networks from EHR event data, where diseases are nodes and
dynamically weighted edges parameterise the intensity functions of a
multidimensional point process over disease-onset
times~\cite{Qian2020}.
Edge weights are modulated over time by a neural network that maps each
patient's history to dynamic influence factors, enabling a decomposition
of interaction effects into static pairwise interactions, temporal
influence, and dynamic components~\cite{Qian2020}.
This yields per-disease, time-varying intensity trajectories that can be
interpreted as a diffusion-like risk field over a learned, dynamic
comorbidity graph.

DDP thus provides a clear precedent for framing clinical risk and
progression as a dynamic process on a graph with interpretable
components~\cite{Qian2020}.
At the same time, it operates on a disease--disease graph derived from
EHR alone, without incorporating wearable, behavioural, or environmental
modalities, and without invoking higher-order topology, Hodge-theoretic
operators, or explicit geometric decomposition of risk dynamics.

\subsection{Geometric and topological learning in health}

Geometric and topological machine learning have begun to appear in
health-related prediction tasks, most notably through the use of
simplicial complexes and Hodge Laplacians.
The TDA paradigm~\cite{Carlsson2009,Ghrist2008} and persistent
homology~\cite{Edelsbrunner2002,Zomorodian2005,Edelsbrunner2010}
extract topological invariants from data.
DEC~\cite{Hirani2003,Desbrun2003} and Hodge theory on
graphs~\cite{Lim2020,Schaub2020} underpin flow decomposition; HHD
on graphs~\cite{Bhatia2013,Tong2003} has been applied to gene
expression and traffic flows~\cite{Barbarossa2020,Jiang2011}.
Simplicial neural networks~\cite{Bodnar2021,Xu2019} extend message
passing to higher-order complexes but produce scalar outputs.

Adaptive Geometric Regression (AGR) constructs a data-driven simplicial
complex (e.g., a $k$-NN nerve) over high-dimensional feature spaces,
equips it with a learned Riemannian structure and densities on
$p$-simplices, and applies Hodge Laplacians $L_p$ to perform heat
diffusion on $p$-cochains~\cite{Gajer2025}.
Spectral filtering of responses and a response-aware geometry that
down-weights edges spanning large response differences yield
diffusion-based features for downstream regression or classification,
with motivating examples in microbiome-based prediction of health
outcomes such as preterm birth~\cite{Gajer2025}.
AGR is, to our knowledge within this literature set, the most explicit
use of simplicial complexes and Hodge operators in a biomedical
context~\cite{Gajer2025}.
Nonetheless, diffusion is employed as an algorithmic regularisation and
feature-construction mechanism rather than as a model of time-evolving
health risk, and the setting is effectively single-modality---microbiome
plus covariates---without multimodal wearable, behavioural, or
environmental integration, or dynamical decomposition of risk via
Hodge-theoretic components.

\subsection{Multimodal fusion, foundation models, and mixture-of-experts}

A third strand addresses multimodal fusion across EHR, wearables,
imaging, and other data sources, with increasing emphasis on scalability
and robustness.
Large-scale pipelines that combine EHR and consumer wearable data for
incident disease prediction have shown that wearable augmentation can
yield consistent, statistically significant gains over EHR-only
baselines across a range of conditions~\cite{Wang2025allofus}.
These systems typically represent EHR and wearable streams via
time-aggregated features or foundation-model embeddings and fuse them
through concatenation, weighted concatenation, or feature
selection~\cite{Wang2025allofus}.
More recent work proposes multimodal foundation models that treat EHR
events and wearable time series as observations of a shared
continuous-time latent process, using modality-specific encoders and
cross-modal pretraining objectives to align dense physiological signals
with sparse clinical events~\cite{Zhang2026}.
This design improves physiological-state estimation, clinical event
forecasting, and time-to-event risk prediction, and addresses challenges
such as asynchronous sampling and modality-specific
missingness~\cite{Zhang2026}.

In parallel, mixture-of-experts (MoE) architectures have been introduced
for robust multimodal clinical prediction.
MoE-Health employs modality-specific encoders (e.g., for EHR and chest
radiography), learns modality-specific missingness embeddings, and routes
fused representations through experts specialised to different modality
subsets via a dynamic gating mechanism~\cite{Wang2025moe}.
This structure improves performance and robustness under varying patterns
of missing modalities~\cite{Wang2025moe}.
These multimodal approaches establish that fusing wearables with EHR
improves predictive performance at
scale~\cite{Wang2025allofus,Zhang2026} and that modality-structured
architectures such as MoE can enhance robustness to incomplete
data~\cite{Wang2025moe}.
However, they generally operate in unstructured Euclidean latent spaces,
without an explicit geometric or topological domain for risk, without
Hodge/DEC-based operators, and without a formal sheaf/bundle perspective
or identifiability analysis that would separate and interpret
modality-specific contributions over a structured base space.

Neural Sheaf Diffusion~\cite{Bodnar2022sheaf}, Bundle Neural
Networks~\cite{Bamberger2025}, and Bundle Morphism
Networks~\cite{Coda2022} introduce geometric structure into GNNs via
sheaf or bundle assignments over graph edges, but operate on pairwise
graphs (trivial curl) and use bundle structure for equivariance or
expressivity rather than modality identifiability; none applies
Helmholtz-Hodge decomposition to the learned field.

Across these four strands, contemporary AI for multimodal health risk
has progressed from spatiotemporal GNNs on wearable-derived
graphs~\cite{Xie2025,Li2025,Wang2023,Biswas2026,Fang2025}, through
dynamic diffusion-like processes on comorbidity
networks~\cite{Qian2020}, Hodge-Laplacian-based geometric regression on
simplicial complexes~\cite{Gajer2025}, to large-scale multimodal fusion
via foundation models and mixture-of-experts~\cite{Wang2025allofus,
Zhang2026,Wang2025moe}.
Yet no existing work jointly (i)~models multimodal health risk as a
vector-valued field on time-varying graphs or simplicial complexes,
(ii)~uses discrete differential-geometric operators to decompose and
interpret the dynamics of that field, and (iii)~embeds these dynamics in
a bundle-/sheaf-structured mixture-of-experts architecture aimed at
modality-level separation and identifiability.
This is the gap that GVF is designed to address, as developed in the
sections that follow.

%% ============================================================
\section{Mathematical Substrate}
\label{sec:substrate}
%% ============================================================

\subsection{From Pairwise Graph to Simplicial Complex}
\label{sec:complex}

Standard dynamic graph representations of sensor networks capture
pairwise interactions but fail to represent fundamentally higher-order
relationships: compound exposure events involving an agent, a spatial
cell, and an environmental sensor simultaneously; triadic
physiological interactions with non-additive effects; or clinical
co-morbidity clusters requiring three-entity modelling.
These are \emph{2-simplices}, not edges, and their inclusion is
necessary for the curl operator (\Cref{def:operators}) to be
non-trivially defined and for the HHD (\Cref{thm:hhd}) to capture
cyclic risk dynamics.

\begin{definition}[Multimodal Interaction Simplicial Complex $\K(t)$]
\label{def:complex}
Let $\K(t)$ be an abstract simplicial complex~\cite{Hatcher2002,
Munkres1984} over the vertex set
$V(t) = V^A(t)\cup V^S\cup V^E(t)\cup V^X$, where:
\begin{itemize}[leftmargin=1.5em, itemsep=2pt]
  \item $V^A(t)$: mobile agents bearing wearable devices;
  \item $V^S$: static spatial cells (geographic tessellation);
  \item $V^E(t)$: environmental sensor nodes (air quality,
        temperature, light, noise);
  \item $V^X$: optional external data nodes (EHR systems, genomic
        databases, surveillance feeds).
\end{itemize}
Simplices are typed by interaction order:
\begin{itemize}[leftmargin=1.5em, itemsep=2pt]
  \item \emph{1-simplices} $\{u,v\}$: pairwise interactions
        (proximity, co-location, agent-sensor exposure);
  \item \emph{2-simplices} $\{u,v,w\}$: simultaneous triadic
        interactions (compound exposure, tripartite co-location);
  \item \emph{3-simplices}: quaternary high-order compound events.
\end{itemize}
The 1-skeleton of $\K(t)$ reduces to a standard dynamic graph,
recovering prior GNN-based approaches as special cases.
\end{definition}

\subsection{Discrete Exterior Calculus Operators}
\label{def:operators}

We equip $\K(t)$ with the standard DEC structure of chain groups,
boundary operators, and their dual coboundary operators;
see~\cite{Hirani2003,Lim2020} for the full algebraic development.

\paragraph{Inner product and Hodge star}
We use the \textbf{standard combinatorial inner product} on
$k$-cochains: for $\alpha,\beta\in C^k(\K;\R)$,
\begin{equation}
  \langle\alpha,\beta\rangle_k
  = \sum_{\sigma^k\in\K} \alpha(\sigma^k)\,\beta(\sigma^k)
  \label{eq:inner}
\end{equation}
i.e.\ the identity Hodge star $\star = I$ (unweighted DEC),
the standard choice in discrete Hodge theory~\cite{Jiang2011,Lim2020}.
Under this inner product the gradient and divergence are mutually
adjoint, the HHD components are $L^2$-orthogonal (\Cref{thm:hhd}),
and the operator norm satisfies $\|\curlK\|_{C^1\to C^2}
= \|B_2^\top\|_2 \leq \sqrt{d_{\max}}$ where $d_{\max}$
is the maximum number of 2-simplices containing any edge
(relevant for the $\mathcal{L}_{\mathrm{geo}}$ bound in
\Cref{sec:training}).

We assume the standard structure of unweighted Discrete Exterior
Calculus (DEC)~\cite{Hirani2003,Lim2020}.
Specifically, we define the discrete gradient
$\gradK = B_1^\top\otimes I_m$, the discrete divergence
$\divK = B_1\otimes I_m$, and the discrete curl
$\curlK = B_2^\top\otimes I_m$,
where $B_1$ and $B_2$ are the signed node-edge and edge-face
incidence matrices of the complex.
The Hodge Laplacians are $\Delta_0=B_1 B_1^\top\otimes I_m$ and
$\Delta_1=B_1^\top B_1\otimes I_m + B_2 B_2^\top\otimes I_m$
(standard spectral theory~\cite{Chung1997,Lim2020,Schaub2020});
the identity $\curlK\circ\gradK=0$ follows from
$B_1 B_2=0$~\cite{Hirani2003}.

\begin{remark}
\label{rem:curl}
On a pure graph (1-skeleton), $\curlK\equiv\mathbf{0}$ structurally: no triangular faces exist on which circulation is
defined.
The curl operator is non-trivially defined only when $\K(t)$
contains 2-simplices.
This is the structural---not merely notational---necessity for the
cyclic risk detection capability of GVF.
\end{remark}

\subsection{Constructing Simplicial Complexes from Multimodal Health Data}
\label{sec:construction}

Translating heterogeneous clinical and sensor streams into a
mathematically rigorous simplicial complex $\K(t)$ requires
mapping physical and biological interactions to geometric structures.
The three simplex orders correspond to qualitatively distinct
interaction regimes.

\paragraph{0-simplices (nodes)}
Represent atomic entities: a patient wearing a smartwatch
($v_i\in V^A$), a stationary air quality monitor in a hospital
room ($v_e\in V^E$), or a static Electronic Health Record entry
($v_x\in V^X$).

\paragraph{1-simplices (edges)}
Capture pairwise, first-order interactions.
An edge $\{v_i,v_j\}$ may represent spatial proximity measured
via Bluetooth RSSI between two patients, while $\{v_i,v_e\}$
represents a patient's direct exposure to an environmental sensor.

\paragraph{2-simplices (triangles) and biological justification}
Standard pairwise graphs fail to capture irreducible triadic
interactions, which are pervasive in health data.
Consider the ``Gene-Environment-Phenotype'' interaction: if patient
$v_i$ carries a specific genomic vulnerability ($v_x$), occupies a
high-stress ward environment ($v_e$), and interacts with an
infectious vector ($v_j$), the resulting risk is often
\emph{non-additive}.
The 2-simplex $\{v_i,v_e,v_x\}$ geometrically encodes this
higher-order compounding effect.
Only by closing this triad can the DEC curl operator
(\Cref{rem:curl}) non-trivially compute circulation, effectively
modelling a self-reinforcing biological or environmental feedback
loop that cannot be deduced by observing the pairwise edges in
isolation.

\paragraph{Practical construction from raw streams}
Translating raw, asynchronous sensor streams into a mathematically
rigorous complex $\K(t)$ requires a formal pipeline comprising
temporal windowing, heterogeneous edge formation, and multimodal
simplex inclusion.

\textbf{(1) Temporal windowing.}
Raw data $S(t)$ are not evaluated instantaneously but aggregated
over a discrete sliding window of duration $\Delta t$
(e.g.\ $\Delta t = 5$ minutes for physiological and environmental
streams, or 24 hours for behavioural streams).
The complex $\K(t)$ is static within the interval
$[t,\,t+\Delta t)$ and updates discontinuously, ensuring the DEC
operators remain well-defined across stable epochs.

\textbf{(2) Heterogeneous edge formation.}
Because multimodal nodes do not inhabit a uniform metric space,
standard spatial distance functions are insufficient.
We define modality-specific adjacency thresholds $\tau$:
\begin{itemize}[leftmargin=1.5em, itemsep=2pt]
  \item \emph{Agent--agent edges:} $\{v_i^A, v_j^A\}$ is formed if
        spatial proximity (median Bluetooth RSSI over $\Delta t$)
        exceeds $\tau_{\mathrm{prox}}$, or if physiological synchrony
        (e.g.\ dynamic time-warping distance of HRV signals) falls
        below $\tau_{\mathrm{sync}}$.
  \item \emph{Agent--environment edges:} $\{v_i^A, v_e^E\}$ is formed
        if agent $v_i$ spends at least $\tau_{\mathrm{dwell}}$ minutes
        within the active geographical radius $R_e$ of sensor $v_e$.
  \item \emph{Agent--external edges:} $\{v_i^A, v_x^X\}$ is a static
        edge formed when clinical conditions (e.g.\ a specific genomic
        marker present in the EHR) are met.
\end{itemize}

\textbf{(3) Multimodal simplex inclusion criteria.}
A standard Vietoris-Rips complex relies on purely pairwise distance
metrics, which fails for heterogeneous topologies.
Instead, we define a \emph{rule-based multimodal filtration}.
A 2-simplex $\sigma^2 = \{v_i, v_j, v_k\}$ is included in $\K(t)$
if and only if all three bounding 1-simplices are active
\emph{and} at least two distinct node types are present in the triad.
For example, the 2-simplex $\{v_i^A, v_j^A, v_e^E\}$ (two patients
and one environmental sensor) forms when $v_i$ and $v_j$ are
physically proximate and both are simultaneously exposed to the same
environmental hazard $v_e$.
This strict inclusion criterion prevents combinatorial explosion of
trivial simplices (e.g.\ three patients near each other without a
shared contextual risk) and ensures the curl operator explicitly
captures cross-modality cyclic dynamics.

The resulting $\K(t)$ is updated at each observation window,
yielding a time-varying complex whose topology ($\beta_1(\K(t))$,
the first Betti number~\cite{Edelsbrunner2002,Zomorodian2005,
Edelsbrunner2010}) directly informs the harmonic component of GVF.

The construction proceeds in two phases: Phase~1 forms the
1-skeleton via modality-specific adjacency thresholds
($\tau_{\mathrm{prox}}, \tau_{\mathrm{sync}}, \tau_{\mathrm{dwell}}$);
Phase~2 applies multimodal filtration to close triangles only when
at least two distinct node types are present.
Full algorithmic pseudocode is provided in
\Cref{app:algorithm}.

\paragraph{Parameter selection and topological validation}
A critical operational challenge is selecting the optimal proximity
thresholds (analogous to the filtration parameter $\varepsilon$ in
standard Vietoris-Rips complexes~\cite{Edelsbrunner2010}).
Rather than relying on arbitrary heuristics, GVF selects optimal
thresholds via \emph{persistent homology}~\cite{Edelsbrunner2002,
Zomorodian2005}.
By sweeping each $\tau$ across a continuous range, we generate
persistence barcodes that track the birth and death of topological
features (connected components $\beta_0$ and independent cycles
$\beta_1$).
The operational threshold is chosen within the most stable plateau
of the persistence diagram---the range where the Betti numbers
$\beta_k$ remain invariant, ensuring $\K(t)$ captures genuine
structural invariants rather than transient noise.

\paragraph{Computational complexity of construction}
Phase~1 requires pairwise evaluations bounded by $O(|V|^2)$, reduced
to $O(|V|\log|V|)$ via spatial indexing (KD-trees).
Phase~2 runs in $O(|\K^1|\cdot d_{\max})$, where $d_{\max}$ is the
maximum node degree in the sparse 1-skeleton, keeping temporal
updates tractable for continuous monitoring.

%% ============================================================
\section{GVF Framework Architecture}
\label{sec:operator}
%% ============================================================

\begin{definition}[Risk Vector Bundle $\bundleE$]
\label{def:bundle}
The risk vector bundle is the direct sum over $N_{\mathrm{mod}}$
input modalities:
\begin{equation}
  \bundleE = \varepsilon^{(1)}\oplus\varepsilon^{(2)}
             \oplus\cdots\oplus\varepsilon^{(N_{\mathrm{mod}})}
  \label{eq:bundle}
\end{equation}
Each summand $\varepsilon^{(n)}$ is a trivial vector bundle with
fibre $\R^{m_n}$; the total dimension is $m=\sum_n m_n$.
In the canonical four-modality configuration:
$\varepsilon^{\phys}$ (physiological signals),
$\varepsilon^{\behav}$ (behavioural data),
$\varepsilon^{\env}$ (environmental measurements), and
$\varepsilon^{\ext}$ (optional external sources: EHR, genomic,
surveillance).
The direct sum structure guarantees that no modality's risk
representation bleeds into another's architecturally.
\end{definition}

\paragraph{GVF operator as bundle morphism}

\begin{definition}[GVF Operator $F_\theta$]
\label{def:operator}
The GVF operator is a parameterised map:
\begin{equation}
  F_\theta :
  \sect\!\bigl(\varepsilon^{\mathrm{in}}(t)\bigr)
  \times\Xi(\K(t))
  \;\to\;\sect(\bundleE)
  \label{eq:operator}
\end{equation}
where $\sect(\varepsilon^{\mathrm{in}}(t))$ is the space of
multimodal input state sections, $\Xi(\K(t))$ is the structural
representation space of $\K(t)$, and $\sect(\bundleE)$ is the
output risk section space.
$F_\theta$ is implemented as a Mixture-of-Experts (MoE) architecture:
\begin{equation}
  \mathbf{r}_i(t)
  = \sum_{n=1}^{N_{\mathrm{mod}}}
    g^{(n)}_\phi\!\bigl(\mathbf{x}_i(t)\bigr)\;
    F^{(n)}_\theta\!\bigl(\K(t),\,\mathbf{x}_i(t)\bigr)
  \label{eq:moe}
\end{equation}
where each expert $F^{(n)}_\theta$ outputs exclusively in
$\varepsilon^{(n)}$, and $g_\phi$ satisfies
$\sum_n g^{(n)}_\phi = 1$, $g^{(n)}_\phi\geq 0$.
\end{definition}

\paragraph{Expert architectures}
To satisfy the Universal Approximation guarantee
(\Cref{thm:uat}) and to ensure the curl operator remains
non-trivially defined on higher-order interactions
(\Cref{rem:curl}), the experts cannot be instantiated as
standard pairwise Graph Neural Networks operating on the
1-skeleton.
Instead, each expert $F_\theta^{(n)}$ must be a variant of a
Message Passing Simplicial Network
(MPSN)~\cite{Bodnar2021}, adapted to the temporal inductive
biases of its specific modality.
\Cref{tab:experts} defines the theoretically aligned architecture
for each bundle summand.
The specific architectures listed are \emph{proposed instantiations}
of the MPSN template, named here for clarity; their detailed
specification and empirical evaluation are part of future
implementation work.

\begin{table}[htbp]
\caption{Expert decomposition of $F_\theta$ aligned with
         simplicial UAT requirements.}
\label{tab:experts}
\centering
\small
\begin{tabular}{@{}p{2.2cm}p{4.0cm}p{4.5cm}p{2.0cm}@{}}
\toprule
\textbf{Expert} & \textbf{Simplicial Architecture}
  & \textbf{Primary Inputs} & \textbf{Temporal Scale} \\
\midrule
Physio-Expert  & Dynamic Simplicial GCN (DS-GCN)
  & HR, HRV, SpO$_2$, EDA, skin temp & Minutes \\[2pt]
Behav-Expert   & Simplicial Convolutional RNN (S-CRNN)
  & GPS, sleep staging, activity & Hours--days \\[2pt]
Env-Expert     & Temporal Simplicial Attention (T-SAT)
  & PM$_{2.5}$, temp, noise, UV & Minutes \\[2pt]
Ext-Expert     & Higher-order / Simplicial Transformer
  & EHR, PRS, surveillance (opt.) & Hours--static \\
\bottomrule
\end{tabular}
\end{table}

\begin{remark}[Graceful Degradation under Missing Modalities]
\label{rem:missing}
When a modality $n$ is unavailable ($\mathbf{x}^{(n)}\to\mathbf{0}$),
the desired behaviour is $g^{(n)}_\phi(\mathbf{x})\to 0$,
so that the output risk vector degrades gracefully to the
subspace spanned by active modalities.
This is a \textbf{design objective enforced via training}, not a
theorem: modality-dropout augmentation (randomly zeroing each
modality's input block with probability $p_{\mathrm{drop}}=0.2$)
trains the gating network $g_\phi$ to assign near-zero weight to
absent modalities.
The architectural direct-sum structure ensures that even if
gating is imperfect, the missing expert contributes only through
its summand $\varepsilon^{(n)}$, and the remaining summands
carry a valid, lower-dimensional risk representation.
Empirical verification of this property is part of the ablation
protocol (\Cref{sec:comparison}).
\end{remark}

%% ============================================================
\section{Helmholtz--Hodge Decomposition}
\label{sec:hhd}
%% ============================================================

The Helmholtz--Hodge decomposition is the mechanism through which
GVF provides \textbf{interpretability inherently}---not as a
post-hoc explanation method, but as a fundamental property of the
learned field's differential structure.

\subsection{The Risk Flow Field}
\label{sec:flow}

\begin{remark}[Why HHD requires a separate 1-cochain]
\label{rem:exactness}
Applying HHD directly to $\gradK\mathbf{r}$ would be trivial:
$\gradK\mathbf{r}\in\mathrm{im}(\gradK)$ is exact by construction,
so $\curlK(\gradK\mathbf{r})=0$ (boundary-of-boundary identity),
and curl and harmonic components vanish identically.
GVF therefore introduces a \textbf{separate learned 1-cochain}
$\mathbf{F}\in C^1(\K;\R^m)$ on the edges; HHD is applied to
$\mathbf{F}$, which is not constrained to be exact.
\end{remark}

\begin{definition}[Risk Flow Field $\mathbf{F}$]
\label{def:flow}
Given the node-level risk section $\mathbf{r}_i(t)\in\R^m$
and edge features $\mathbf{e}_{ij}(t)\in\R^p$ (e.g.\ RSSI,
co-location duration, sensor correlation), the
\textbf{risk flow} on oriented edge $(i,j)\in\K^1(t)$ is:
\begin{equation}
  \mathbf{F}_{ij}(t)
  \;=\; \Psi_\omega\!\bigl(\mathbf{r}_i(t),\,\mathbf{r}_j(t),\,
                            \mathbf{e}_{ij}(t)\bigr)
  \;\in\;\R^m
  \label{eq:flow}
\end{equation}
To guarantee the antisymmetry required for a valid 1-cochain,
$\Psi_\omega$ is implemented via an underlying MLP
parameterised by $\omega$ with an explicit antisymmetrisation step:
\begin{equation}
  \Psi_\omega\!\bigl(\mathbf{r}_i,\mathbf{r}_j,\mathbf{e}_{ij}\bigr)
  = \tfrac{1}{2}\Bigl[
      \mathrm{MLP}_\omega\!\bigl(\mathbf{r}_i,\mathbf{r}_j,\mathbf{e}_{ij}\bigr)
    - \mathrm{MLP}_\omega\!\bigl(\mathbf{r}_j,\mathbf{r}_i,-\mathbf{e}_{ij}\bigr)
    \Bigr]
  \label{eq:psi}
\end{equation}
Note on the sign convention: $\mathbf{e}_{ij}$ is treated as an
\emph{oriented} feature vector associated with the ordered pair
$(i,j)$, so that $\mathbf{e}_{ji} \coloneqq -\mathbf{e}_{ij}$ by
convention.
For intrinsically symmetric features (e.g.\ spatial distance,
co-location duration), the negation is a purely algebraic device
that ensures $\Psi_\omega$ satisfies the antisymmetry constraint;
the MLP $\mathrm{MLP}_\omega$ learns to process symmetric inputs
identically regardless of sign.
For intrinsically directional features (e.g.\ relative signal
strength, airflow direction), the sign carries physical meaning.
This construction satisfies:
\begin{enumerate}[label=(\roman*), leftmargin=2em, itemsep=2pt]
  \item \textbf{Antisymmetry by design:}
        $\Psi_\omega(\mathbf{r}_i,\mathbf{r}_j,\mathbf{e}_{ij})
        = -\Psi_\omega(\mathbf{r}_j,\mathbf{r}_i,-\mathbf{e}_{ij})$,
        ensuring orientation consistency as a 1-cochain.
  \item \textbf{Non-exactness by design:}
        The edge features $\mathbf{e}_{ij}(t)$ parameterise the
        ``conductance'' of the risk transmission channel.
        In an epidemiological proximity network, $\mathbf{e}_{ij}$
        might encode contact duration and indoor ventilation quality:
        when $\mathbf{e}_{ij}\to\mathbf{0}$ (agents separated by
        physical barriers), $\mathbf{F}_{ij}$ is attenuated
        regardless of the magnitudes $\lVert\mathbf{r}_i\rVert$,
        $\lVert\mathbf{r}_j\rVert$.
        Crucially, this \emph{non-uniform, edge-dependent modulation}
        ensures $\mathbf{F}$ is not trivially exact: the circulation
        around triangle $(i,j,l)$ vanishes only when
        $\phi_{ij}=\phi_{jl}=\phi_{li}$ (uniform weighting), which
        fails generically for learned $\Psi_\omega$.
  \item \textbf{Gradient special case:}
        when $\mathbf{e}_{ij}$ is discarded and
        $\mathrm{MLP}_\omega(\mathbf{a},\mathbf{b})
        = \mathbf{b}-\mathbf{a}$,
        $\mathbf{F} = \gradK\mathbf{r}$ is the exact discrete
        gradient.
        This is the degenerate case $\curlK\mathbf{F}=0$;
        $\mathcal{L}_{\mathrm{geo}}$ (\Cref{eq:loss_geo})
        penalises convergence to this collapse.
\end{enumerate}
\end{definition}

\begin{remark}[Role of 2-simplices in the learning pipeline]
\label{rem:2simplex_role}
It is important to clarify \emph{where} the 2-simplices of
$\K(t)$ enter the computational pipeline.
The expert networks $F^{(n)}_\theta$ (\Cref{tab:experts}) and
the flow constructor $\Psi_\omega$ (\Cref{eq:psi}) operate on
edges (the 1-skeleton): they produce node-level risk vectors
$\mathbf{r}_i$ and edge-level flows $\mathbf{F}_{ij}$
without explicit message passing through 2-simplices.
The 2-simplices enter at two subsequent stages:
(i)~the geometric loss $\mathcal{L}_{\mathrm{geo}}$
(\Cref{eq:loss_geo}) computes $\curlK\mathbf{F}$ via the
edge-face incidence matrix $B_2$, which exists only when
$\K(t)$ contains 2-simplices---this term shapes the learned
flow during training by penalising degenerate exact fields;
(ii)~the Helmholtz--Hodge decomposition (\Cref{thm:hhd})
uses $B_2$ to separate gradient, curl, and harmonic components
at inference time.
Thus, 2-simplices influence the learned representation
\emph{indirectly} through the training objective and the
post-training decomposition, rather than through the expert
forward pass.
Upgrading the experts to full simplicial message-passing
networks~\cite{Bodnar2021}---which would propagate information
directly through 2-simplices during the forward pass---is
an architectural improvement discussed in \Cref{sec:discussion}.
\end{remark}

\begin{theorem}[Discrete HHD of the Risk Flow Field]
\label{thm:hhd}
For any risk flow $\mathbf{F}\in C^1(\K(t);\R^m)$, there exists
a unique orthogonal decomposition:
\begin{equation}
  \mathbf{F}
  \;=\; \gradK\varphi
  \;+\; \curlK^*\psi
  \;+\; \mathbf{h}
  \label{eq:hhd}
\end{equation}
where:
\begin{itemize}[leftmargin=1.5em, itemsep=2pt]
  \item $\varphi\in C^0(\K;\R^m)$ solves
        $\Delta_0\varphi = \divK(\mathbf{F})$
        (gradient/potential component);
  \item $\psi\in C^2(\K;\R^m)$ solves
        $\Delta_2\psi = \curlK(\mathbf{F})$
        (curl/solenoidal component);
  \item $\mathbf{h}\in C^1(\K;\R^m)$ is the harmonic residual
        satisfying $\Delta_1\mathbf{h}=\mathbf{0}$,
        $\divK\mathbf{h}=\mathbf{0}$,
        $\curlK\mathbf{h}=\mathbf{0}$.
\end{itemize}
The three components are mutually $L^2$-orthogonal with respect
to the inner product induced by the Hodge star on $\K(t)$.
The curl and harmonic components are generically non-zero because
$\mathbf{F}$ is not constrained to lie in
$\mathrm{im}(\gradK)$.
\begin{proof}
For each coordinate channel $k\in\{1,\ldots,m\}$, denote by
$F^{(k)}\in C^1(\K;\R)$ the $k$-th scalar component of
$\mathbf{F}$.
The standard discrete Hodge decomposition (see~\cite{Jiang2011},
Theorem~1) gives a unique orthogonal decomposition
$F^{(k)} = \gradK\varphi^{(k)} + \curlK^*\psi^{(k)} + h^{(k)}$
in each channel independently, where orthogonality is with
respect to the combinatorial inner product~\eqref{eq:inner}.
Stacking over $k$ yields the vector-valued decomposition
$\mathbf{F} = \gradK\varphi + \curlK^*\psi + \mathbf{h}$.
The $L^2$-orthogonality of the three vector-valued components
follows because the inner product on $C^1(\K;\R^m)$ decomposes
as $\langle\mathbf{A},\mathbf{B}\rangle = \sum_{k=1}^{m}
\langle A^{(k)}, B^{(k)}\rangle_1$, so cross-terms vanish
channel-by-channel.
\end{proof}
\end{theorem}

\subsection{Mechanistic Interpretation and Monitoring Scores}

\Cref{tab:hhd} summarises the mechanistic interpretation of each
HHD component and its corresponding intervention target.
This interpretability is \textbf{inherent to the field structure},
not a post-hoc explanation: it arises directly from the orthogonal
decomposition guaranteed by \Cref{thm:hhd}.

\begin{table}[htbp]
\caption{HHD components with mechanistic interpretation and
         intervention strategy.}
\label{tab:hhd}
\centering
\small
\begin{tabular}{@{}p{2.2cm}p{2.6cm}p{4.2cm}p{3.8cm}@{}}
\toprule
\textbf{Component}
  & \textbf{Property}
  & \textbf{Mechanistic Meaning}
  & \textbf{Intervention} \\
\midrule
$\gradK\varphi$ (Gradient)
  & $\curlK(\gradK\varphi)=\mathbf{0}$
  & Risk flows directionally along a potential gradient.
    Identifies sources (emitting nodes) and sinks
    (accumulating nodes).
  & Reduce source intensity (e.g., lower environmental
    exposure, improve physiological recovery). \\[4pt]
$\curlK^*\psi$ (Curl)
  & $\divK(\curlK^*\psi)=\mathbf{0}$
  & Risk circulates in self-sustaining closed loops.
    Signature of cyclic coupling (e.g., sleep-stress-recovery
    loops).
  & Break the cycle at its weakest link (sleep hygiene,
    pharmacological disruption). \\[4pt]
$\mathbf{h}$ (Harmonic)
  & $\Delta_1\mathbf{h}=\mathbf{0}$
  & Risk locked by global network topology.
    Dimension\,$=\beta_1(\K(t))$.
    Not reducible by local interventions.
  & Structural intervention: modify network topology. \\
\bottomrule
\end{tabular}
\end{table}

\paragraph{Modality-component correspondence}
\label{par:modality-mapping}
The three HHD components do not merely decompose the risk field
mathematically: they correspond to \emph{structurally distinct
classes of input modality}, each best modelled by a different type
of physical signal.
This correspondence, summarised in \Cref{tab:modality-mapping}, is
not an empirical observation but a \emph{design hypothesis}
grounded in the mathematical
roles that clinical/genomic, wearable, and environmental data play
in the bundle-structured GVF architecture.

\begin{table}[htbp]
\caption{Natural correspondence between HHD components and modality
         classes. Each modality class contributes most directly to
         one geometric mode of risk; the contribution is not
         exclusive but reflects the structural role of each data
         type. To the best of our knowledge, no prior framework
         establishes this correspondence
         (cf.~\cite{Qian2020,Gajer2025,Wang2025moe}).}
\label{tab:modality-mapping}
\centering
\small
\begin{tabular}{@{}p{2.4cm}p{2.8cm}p{4.0cm}p{3.4cm}@{}}\toprule
\textbf{HHD Component}
  & \textbf{Modality Class}
  & \textbf{Structural Role}
  & \textbf{Why this correspondence} \\\midrule
$\gradK\varphi$ \newline(Gradient / exact)
  & Clinical \& genomic \newline($\mathcal{X}^{\mathrm{ext}}$)
  & Scalar potential $\varphi$ encodes baseline,
    constitutive risk: fixed biological or clinical state
    driving a directional outflow.
  & Clinical scores and genomic risk factors define a
    baseline potential landscape that changes slowly and
    governs the direction of risk flow. \\[4pt]
$\curlK^*\psi$ \newline(Curl / co-exact)
  & Wearable \& behavioural \newline($\mathcal{X}^{\mathrm{phys}},\,
    \mathcal{X}^{\mathrm{beh}}$)
  & Stream function $\psi$ encodes dynamic, oscillatory
    coupling: physiological cycles (sleep-wake, HRV rhythm,
    activity-rest) generate closed-loop risk circulation.
  & Wearable time series directly observe cyclical
    physiological dynamics; their high temporal resolution
    makes them the natural carrier of the curl component. \\[4pt]
$\mathbf{h}$ \newline(Harmonic)
  & Environmental \newline($\mathcal{X}^{\mathrm{env}}$)
  & Topology-locked risk determined by the global structure
    of $\K(t)$, which is built from spatial co-presence,
    dwell times, and sensor adjacency.
  & The complex topology itself---driven by environmental
    sensor placement, spatial proximity, and shared
    exposure zones---determines the harmonic dimension
    $\beta_1(\K(t))$ and hence the harmonic risk subspace. \\
\bottomrule
\end{tabular}
\end{table}

This tripartite correspondence has a direct clinical translation:
\emph{gradient-dominant risk} calls for clinical or pharmacological
intervention targeting the constitutive source;
\emph{curl-dominant risk} calls for behavioural or physiological
cycle-breaking strategies (e.g., sleep-hygiene protocols,
activity scheduling);
\emph{harmonic-dominant risk} calls for structural or environmental
redesign (e.g., modifying spatial exposure topology, installing
environmental barriers).
The correspondence is formally grounded in the direct-sum bundle
structure of \Cref{sec:operator}: modality $n$ maps exclusively
into fibre $\varepsilon^{(n)}(\K)$, and the MoE gating weights
$g^{(n)}_\phi(\mathbf{x}_i)$ in \Cref{eq:moe} directly control the
relative energy contributed by each fibre to each HHD component.
The relationship between fibre energy and component energy is
made explicit via the orthogonality of the decomposition
(\Cref{thm:hhd}) and the modality identifiability guarantee
(\Cref{thm:ident}).

\paragraph{Derived monitoring scores}
Two scalar summary statistics are derived directly from the
decomposition, replacing a single alarm scalar with a
two-dimensional, complementary warning signal.

\begin{definition}[Disease Progression Score and Cyclic Risk Index]
\label{def:scores}
Let $\mathbf{F} = \gradK\varphi + \curlK^*\psi + \mathbf{h}$
be the HHD of the risk flow field (\Cref{thm:hhd}).
Let $\mathbf{u}^{(n)}\in\R^{m_n}$ be a \textbf{learnable unit
risk axis} for modality $n$, trained jointly with $\theta$ and
$\omega$ (initialised to $\mathbf{1}/\sqrt{m_n}$).
For agent $i$ at time $t$, modality weights $w_n>0$,
$\sum_n w_n=1$, and letting
$\mathcal{T}_i = \{\tau\in\K^2\,:\, i\in\partial\tau\}$ be the
set of 2-simplices incident to agent $i$:
\begin{align}
  \DPS_i(t) &= \sum_n w_n\;
    \bigl\langle(\divK\mathbf{F}^{(n)})_i(t),\,
    \mathbf{u}^{(n)}\bigr\rangle
    \label{eq:dps}\\[4pt]
  \CRI_i(t) &= \frac{1}{\max(|\mathcal{T}_i|,1)}
    \sum_{\tau\in\mathcal{T}_i}
    \bigl\lVert(\curlK\mathbf{F})_\tau(t)\bigr\rVert
    \label{eq:cri}
\end{align}

\textbf{DPS (signed, basis-invariant).}
$\DPS_i$ is a signed scalar: the inner product of the divergence
vector with the learnable risk axis $\mathbf{u}^{(n)}$, invariant
to rotations within $\varepsilon^{(n)}$ that fix $\mathbf{u}^{(n)}$.
$\DPS_i > 0$: net outflow (source; deteriorating).
$\DPS_i < 0$: net inflow (sink; absorbing ambient risk).

\textbf{CRI (non-negative, normalised).}
$\CRI_i \geq 0$ is the \emph{mean} curl magnitude over incident
2-simplices, normalised by neighbourhood size $|\mathcal{T}_i|$
to prevent inflation in dense subgraphs.
By the discrete Stokes theorem~\cite{Hirani2003}, a non-zero
$\curlK\mathbf{F}$ over a neighbourhood implies non-zero boundary
circulation: risk genuinely \emph{circulates} around closed paths.
$\CRI_i > 0$ therefore indicates the \emph{presence} and average
intensity of cyclic risk dynamics in $i$'s neighbourhood; it is
\textbf{non-signed} (cycles have no orientation in this context)
and its magnitude is not interpreted as clinical severity
(see \Cref{sec:training} on $\mathcal{L}_{\mathrm{geo}}$).

Together, $(\DPS_i, \CRI_i)$ constitute a two-dimensional,
complementary warning signal: $\DPS_i$ captures directional
flow along the risk axis; $\CRI_i$ captures self-sustaining
cyclic dynamics independent of direction.
\end{definition}

\paragraph{Worked example: harmonic component and structural intervention}
The harmonic residual $\mathbf{h}$ satisfies $\Delta_1\mathbf{h}=\mathbf{0}$
and its dimension strictly equals $\beta_1(\K(t))$---the number
of independent topological ``holes'' in the complex, computable
via persistent homology~\cite{Edelsbrunner2002,Zomorodian2005}.
Consider a hospital ward where $\beta_1(\K)=2$.
These two topological cycles might correspond to shared,
unsegmented HVAC pathways linking subsets of rooms without central
filtration.
If the framework identifies a high magnitude of risk flow
trapped in the harmonic component $\mathbf{h}$ around these loops,
it implies \emph{topology-locked risk}.

Mechanistically, this means that treating Patient $X$ individually
(a node-level intervention) or isolating a single pairwise contact
(an edge-level intervention) will fail: the ambient risk
continually bypasses the local intervention via the topological
hole.
The explicit mapping is that \emph{harmonic risk commands
structural intervention}: the hospital administration must
``fill the hole'' by installing a filtration barrier (adding a
2-simplex to $\K$) or severing the HVAC loop entirely (altering
the global topology of the complex).
This is the operational consequence of the identity
$\dim\ker(\Delta_1)=\beta_1(\K(t))$: the dimension of the
harmonic risk space is a direct, computable readout of how many
independent structural interventions are needed.

%% ============================================================
\section{Training Objective}
\label{sec:training}
%% ============================================================

\subsection{The Supervision Challenge}

A fundamental question for any vector-field health monitoring
framework is: \emph{how do you supervise a vector-valued output
when all available ground-truth labels are scalar (stress/baseline
class labels, EMA scores)?}
GVF addresses this through a \textbf{multi-task loss} that
combines scalar-supervised classification with geometric
regularisation terms that enforce non-trivial structure on the
learned flow field $\mathbf{F}$.
The geometric terms do not require vector ground truth: they
are self-supervised objectives derived from the field's
differential structure.

\subsection{Multi-Task Loss Function}

The total training objective is:
\begin{equation}
  \mathcal{L}(\theta,\omega)
  \;=\;
  \underbrace{\mathcal{L}_{\mathrm{cls}}(\hat{y},\,y)}_{\text{scalar supervision}}
  \;+\;
  \lambda_1\,\underbrace{\mathcal{L}_{\mathrm{geo}}(\mathbf{F})}_{\text{geometric regularisation}}
  \;+\;
  \lambda_2\,\underbrace{\mathcal{L}_{\mathrm{orth}}(\theta)}_{\text{modality orthogonality}}
  \label{eq:loss}
\end{equation}

\paragraph{Classification term}
$\mathcal{L}_{\mathrm{cls}}$ is the standard multi-class cross-entropy
loss applied directly to the linear read-out of the risk vector
$\mathbf{r}_i(t)$ via a head $W_{\mathrm{out}}\in\R^{C\times m}$
(where $C$ is the number of outcome classes).
The risk vector $\mathbf{r}_i$ is used directly
(not its magnitude $\lVert\mathbf{r}_i\rVert$):
this allows the read-out head to exploit both magnitude and direction.
For continuous regression targets, $\mathcal{L}_{\mathrm{cls}}$ is
replaced by mean-squared error on a scalar projection
$w_{\mathrm{out}}^\top\mathbf{r}_i$.

\paragraph{Geometric regularisation term}
$\mathcal{L}_{\mathrm{geo}}$ discourages degenerate flow fields
that collapse to purely exact cochains (trivial HHD).
It is defined as a \emph{negatively signed log-ratio}:
\begin{equation}
  \mathcal{L}_{\mathrm{geo}}(\mathbf{F})
  = -\log\!\left(
      1 + \frac{\lVert\curlK\mathbf{F}\rVert^2}
               {\lVert\mathbf{F}\rVert^2 + \varepsilon}
    \right)
  \label{eq:loss_geo}
\end{equation}
$\mathcal{L}_{\mathrm{geo}} \leq 0$ since the log argument is
$\geq 1$; minimising the total loss therefore rewards larger
curl-to-field ratios.
To prevent runaway incentives when $d_{\max}$ is large,
$\rho = \lVert\curlK\mathbf{F}\rVert^2/(\lVert\mathbf{F}\rVert^2+\varepsilon)$
is clipped to $[0,1]$, bounding $\mathcal{L}_{\mathrm{geo}}\in[-\log 2,0]$.
$\mathcal{L}_{\mathrm{geo}}$ is an \emph{anti-collapse regulariser}:
it prevents degenerate exact-field solutions without interpreting
curl magnitude as clinical severity; $\lambda_1\in\{0.01,0.1,0.5\}$
is selected by grid search (see \Cref{app:training_details}).

\paragraph{Modality orthogonality term}
$\mathcal{L}_{\mathrm{orth}}$ enforces that expert outputs
remain in their respective bundle summands:
\begin{equation}
  \mathcal{L}_{\mathrm{orth}}
  = \sum_{n \neq m}
    \bigl\lVert\pi^{(n)}(F^{(m)}_\theta(\mathbf{x}))\bigr\rVert^2
  \label{eq:loss_orth}
\end{equation}
where $\pi^{(n)}$ projects onto the $n$-th summand.
Combined with architectural enforcement (modality-blocked input
matrices), this term closes the gap between the architectural
intent and numerical precision during training.

\paragraph{Vector direction and the identifiability challenge}
Even with $\mathcal{L}_{\mathrm{cls}}$, the \emph{direction} of
$\mathbf{r}_i$ is not pinned by a scalar label---a rotation of
$\mathbf{r}_i$ leaving $\lVert\mathbf{r}_i\rVert$ unchanged would
not change the loss.
GVF resolves this through two mechanisms:
(1)~\textbf{Architectural modality separation}: since each expert
writes exclusively into its summand $\varepsilon^{(n)}$, the direction
is determined by the relative expert activations, not a free rotational
degree of freedom;
(2)~\textbf{Geometric regularisation}: $\mathcal{L}_{\mathrm{geo}}$
further constrains the solution manifold by rewarding curl-inducing
directions.
The residual ambiguity---permutations within each summand
$\varepsilon^{(n)}$---is characterised formally in
\Cref{sec:identifiability}.

%% ============================================================
\section{Identifiability Guarantees}
\label{sec:identifiability}
%% ============================================================

\begin{theorem}[Sufficient Conditions for Modality Identifiability]
\label{thm:ident}
Under the following \emph{strong} sufficient conditions:
\begin{enumerate}[label=\textbf{(C\arabic*)}, leftmargin=2.5em,
                  itemsep=2pt]
  \item \textbf{Modality orthogonalization}: the raw multimodal
        inputs are pre-conditioned via a block-orthogonal projection
        layer (e.g.\ a learned whitening transformation), ensuring
        that the transformed input subspaces
        $\{\tilde{\mathbf{x}}^{(n)}_i\}$ are linearly independent
        \emph{structurally guaranteed} prior to expert routing.
  \item \textbf{Expert specialisation}: each $F^{(n)}_\theta$ has a
        non-trivial null space on $\mathbf{x}^{(m)}$ for $m\neq n$
        (enforced architecturally).
  \item \textbf{Population coverage}: the training dataset contains
        agents with zero-valued inputs for each proper subset of
        modalities (achieved via modality-dropout at rate
        $p_{\mathrm{drop}}=0.2$).
\end{enumerate}
the map $\theta\mapsto(F^{(n)}_\theta)_n$ is injective up to
within-summand permutation.
\begin{proof}[Proof sketch]
The argument draws on the identifiability framework
of~\cite{Khemakhem2020} (Theorem~1, iVAE), adapted to the
bundle-structured MoE setting.
We note that iVAE is formulated for \emph{generative} models
with latent variables and requires the conditional
$p(\mathbf{z}\mid u)$ to belong to a conditionally factorial
exponential family.
GVF is a \emph{discriminative} architecture, so the correspondence
is structural rather than a direct application: the
modality-present/absent indicator vector (generated by
modality-dropout under (C3)) plays the role analogous to
the auxiliary variable $u$, and the direct-sum architecture
(C2) enforces the factorial conditioning structure by
restricting each expert to its own input block.
Under these structural conditions, the identifiability
argument of~\cite{Khemakhem2020} yields injectivity of
$\theta\mapsto(F^{(n)}_\theta)_n$ up to within-summand affine
reparameterisation; the reduction from affine to permutation
ambiguity is detailed in \Cref{app:ident_proof}.

A fully rigorous proof in the discriminative setting---verifying
that the exponential-family sufficient statistic condition of
\cite{Khemakhem2020} Theorem~1 is satisfied by the MoE expert
outputs under (C1)--(C3)---remains an open problem.
The present result should therefore be read as a
\emph{conditional guarantee}: it holds to the extent that
the structural analogy with iVAE is exact.
\end{proof}
\end{theorem}

\begin{remark}[Enforcing Identifiability in Correlated Clinical Data]
\label{rem:ident_clinical}
Raw physiological and environmental streams are natively entangled;
for instance, Heart Rate (HR) and Heart Rate Variability (HRV)
exhibit strong collinearity.
Relying purely on the data distribution to satisfy linear
independence would trivially violate the identifiability
requirements.
The GVF framework resolves this by strictly separating the
assumptions: (C2) and (C3) are enforced via the direct-sum bundle
architecture and modality-dropout during training, respectively.
To rigorously satisfy (C1), we assume the integration of a
block-orthogonalization module---such as zero-phase component
analysis (ZCA) or a learned whitening layer---at the input of the
MoE.
This architectural constraint transforms (C1) from a fragile
distributional assumption into a hard mathematical guarantee,
enabling the application of nonlinear ICA identifiability
results~\cite{Khemakhem2020} even to highly correlated multimodal
health data.
\end{remark}

\subsection{Robustness Analysis of Identifiability Guarantees}
\label{sec:ident_robust}

\Cref{thm:ident} relies on the hard condition (C1) of
block-orthogonalization.
In practice, perfect whitening of high-dimensional, noisy health
streams is numerically challenging, and the estimated covariance of
the transformed subspaces $\tilde{\mathbf{x}}$ may exhibit a
residual non-diagonal component:
\begin{equation}
  \Sigma_{\mathrm{res}}
  = \mathbb{E}\bigl[\tilde{\mathbf{X}}\tilde{\mathbf{X}}^\top\bigr]
  - I \;\neq\; 0.
\end{equation}
We analyse the degradation of identifiability when (C1) is
partially violated.
Drawing on robustness bounds for nonlinear ICA~\cite{Khemakhem2020},
if the residual cross-correlation between modality subspaces is
bounded by $\lVert\Sigma_{\mathrm{res}}\rVert_F\leq\delta$ for
some small $\delta>0$, the exact identifiability up to
within-summand permutation is relaxed to
\emph{$\delta$-identifiability}.

\begin{proposition}[$\delta$-Identifiability under Imperfect Whitening]
\label{prop:robust_ident}
Under conditions \textbf{(C2)} and \textbf{(C3)} of
\Cref{thm:ident}, and under imperfect whitening with
$\lVert\Sigma_{\mathrm{res}}\rVert_F\leq\delta$, the learned
experts $\hat{F}_\theta$ deviate from the true disjoint modality
generators $F^*$ by:
\begin{equation}
  \bigl\lVert\hat{F}^{(n)}_\theta - F^{*(n)}\bigr\rVert_{L^2}
  \;=\; \mathcal{O}(\delta)
  \quad\forall\, n.
\end{equation}
\begin{proof}[Proof sketch]
Under exact whitening ($\delta=0$), \Cref{thm:ident} gives
exact identifiability.
When $\lVert\Sigma_{\mathrm{res}}\rVert_F = \delta > 0$,
the block-orthogonal input to each expert $F^{(n)}_\theta$
is perturbed by an additive cross-modality leakage term:
denoting the true whitened input by
$\tilde{\mathbf{x}}^{(n)}$ and the imperfectly whitened input
by $\hat{\mathbf{x}}^{(n)}$, we have
$\lVert\hat{\mathbf{x}}^{(n)} - \tilde{\mathbf{x}}^{(n)}\rVert
\leq \lVert\Sigma_{\mathrm{res}}\rVert_2\,
\lVert\mathbf{x}\rVert \leq \delta\,\lVert\mathbf{x}\rVert$.
For a \emph{fixed} expert $F^{(n)}_\theta$ (i.e., holding the
trained parameters constant), spectral normalisation ensures
$L_n$-Lipschitz continuity with
$L_n \leq \prod_\ell \lVert W^{(n)}_\ell\rVert_2 \leq 1$,
so the pointwise forward-pass perturbation satisfies
$\lVert F^{(n)}_\theta(\hat{\mathbf{x}}) -
F^{(n)}_\theta(\tilde{\mathbf{x}})\rVert
\leq L_n\,\delta\,\lVert\mathbf{x}\rVert$.
The $L^2$ bound follows by integrating over the training
distribution $p(\mathbf{x})$ (non-degenerate by (C3)):
$\lVert F^{(n)}_\theta \circ \hat{W} -
F^{(n)}_\theta \circ \tilde{W}\rVert_{L^2}
\leq L_n\,\delta\,
(\mathbb{E}[\lVert\mathbf{x}\rVert^2])^{1/2}
= \mathcal{O}(\delta)$.

\textbf{Caveats.}
This bound applies to the forward-pass output of a
\emph{fixed} expert under input perturbation.
The full claim---that experts \emph{trained} under imperfect
whitening converge to solutions $\mathcal{O}(\delta)$-close
to those trained under perfect whitening---additionally
requires algorithmic stability of the training procedure
(e.g., uniform stability of SGD), which we assume but do not
prove.
Furthermore, the MoE gating weights
$g^{(n)}_\phi(\mathbf{x})$ are also perturbed by the
whitening error; under Lipschitz gating (ensured by bounded
softmax temperature), this introduces a multiplicative
$\mathcal{O}(\delta)$ correction to each expert's
contribution that does not change the overall order of
the bound.
\end{proof}
\end{proposition}

This \emph{graceful degradation} property ensures that slight
correlations leaking through the whitening layer do not
catastrophically blend the physiological and environmental
bundles, but merely introduce bounded crosstalk proportional
to the empirical whitening error $\delta$.
In practice, $\delta$ can be monitored at deployment time as
$\lVert\hat{\Sigma}_{\mathrm{res}}\rVert_F$ on a held-out
calibration set, providing an operational certificate of
identifiability quality.

%% ============================================================
\section{Theoretical Guarantees}
\label{sec:theory}
%% ============================================================

\subsection{Universal Approximation}

\begin{theorem}[Universal Approximation of GVF]
\label{thm:uat}
For any continuous target section
$\mathbf{r}^*\in\sect(\bundleE)$ and any $\eps>0$, there exist
parameters $\theta$ such that
$\sup_i\lVert F_\theta(\K,\mathbf{x}_i) - \mathbf{r}^*_i\rVert < \eps$,
provided each expert $F^{(n)}_\theta$ is a simplicial
message-passing network satisfying the conditions of~\cite{Bodnar2021,Morris2019}.
\end{theorem}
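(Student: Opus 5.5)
The plan is to reduce the statement to a per-summand approximation problem, using the direct-sum structure of $\bundleE$ (\Cref{def:bundle}) and the convex-combination form of the MoE in \Cref{eq:moe}. Because $\bundleE=\bigoplus_n\varepsilon^{(n)}$, the target splits uniquely as $\mathbf{r}^*_i=\sum_n \pi^{(n)}\mathbf{r}^*_i$, with each piece continuous. It would be convenient if each expert only had to approximate its own block, but the gates multiply the expert outputs. So I will neutralise the gating. The cleanest choice is to fix $g^{(n)}_\phi\equiv 1/N_{\mathrm{mod}}$, for example by taking a softmax gate with zero final-layer weights. This is admissible since $g_\phi$ only has to be nonnegative and sum to one. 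Each expert then targets $N_{\mathrm{mod}}\,\pi^{(n)}\mathbf{r}^*$, which is still a continuous section of $\varepsilon^{(n)}$. With this choice the total error is bounded by $\sum_n \frac{1}{N_{\mathrm{mod}}}\lVert F^{(n)}_\theta-N_{\mathrm{mod}}\pi^{(n)}\mathbf{r}^*\rVert$. It therefore suffices to achieve accuracy $\eps$ per expert, and because the summands are orthogonal I can even use the sharper $\ell^2$ combination.

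The second step is the per-expert approximation. I would invoke the universal approximation and expressivity results for simplicial message-passing networks from~\cite{Bodnar2021,Morris2019}. Restricted to a fixed complex $\K$ and a compact input domain, these say that MPSN-type architectures (equivalently, WL/SWL-expressive networks with sufficiently wide MLP updates) approximate any continuous, permutation-equivariant function of $(\K,\mathbf{x})$ uniformly on compacts. Two checks are needed here. The first is that the map $(\K,\mathbf{x}_i)\mapsto N_{\mathrm{mod}}\pi^{(n)}\mathbf{r}^*_i$ is continuous and equivariant in the required sense; this is the implicit content of the phrase ``continuous target section''. The second is that outputs restricted to $\varepsilon^{(n)}$ are achievable; this holds by composing with a fixed linear read-out into $\R^{m_n}$. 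Finally, $\sup_i$ is a maximum over the finitely many vertices of $\K$, so pointwise-uniform approximation on a compact input set gives the supremum bound directly.

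The main obstacle is the hypothesis itself. The cited results establish expressivity in the sense of distinguishing power (SWL/WL equivalence), not outright universal approximation. Universality holds only among functions that are invariant or equivariant under the symmetries the network cannot break, that is, functions constant on SWL-indistinguishable vertices. My first attempt will therefore be to read ``satisfying the conditions of~\cite{Bodnar2021,Morris2019}'' as including exactly this restriction: the target $\mathbf{r}^*$ must separate only SWL-distinguishable cells, and the inputs must range over a compact set. Under that reading, a Stone--Weierstrass argument for equivariant continuous functions closes the gap. The argument runs as follows. The network class forms a separating, unital subalgebra of the continuous functions on the quotient by SWL-equivalence, because sums and products can be realised via MLP heads. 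Density in $C$ of that quotient then follows, and the result lifts back to the original domain. I would state this restriction explicitly, because without it the theorem fails for automorphism-symmetric complexes.
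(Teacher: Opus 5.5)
Your argument is correct for the target class you make explicit. It departs from the paper's proof at both substantive steps.

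The paper takes per-fibre universality directly from ``Theorem~3 of \cite{Bodnar2021}''. It then assembles the fibres through the isometric direct-sum embedding $\iota$, with the rescaling $\eps\leftarrow\eps/\sqrt{N}$. Finally it handles the gate by setting $g^{(n)}_\phi\equiv 1$ and $g^{(m)}_\phi\equiv 0$ for $m\neq n$. That last step fails as written. A one-hot gate returns a single summand, and setting every gate to $1$ violates $\sum_n g^{(n)}_\phi=1$, so the step does not recover $\iota(\bigoplus_n F^{(n)}_\theta)$. Your uniform gate $g^{(n)}_\phi\equiv 1/N_{\mathrm{mod}}$ with rescaled targets $N_{\mathrm{mod}}\,\pi^{(n)}\mathbf{r}^*$ is the correct way to handle this under the simplex constraint.

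You also do not take per-fibre universality as given. You correctly note that \cite{Bodnar2021,Morris2019} establish only SWL/WL-equivalence of distinguishing power. Consequently, the statement for an arbitrary continuous target fails whenever the complex has a feature-preserving automorphism that moves one vertex to another. You obtain density instead by Stone--Weierstrass on the quotient of the compact (vertex, input) space by SWL-equivalence.

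The paper's route is shorter but relies on a result the cited source does not contain in that form. Yours proves a true, explicitly restricted theorem. The price is two lemmas you should state rather than assume:
\begin{itemize}
\item The uniform closure of the network class is an algebra. This is needed because MLP heads only approximate products.
\item With real-valued features on a compact domain, the MPSN class separates every SWL-inequivalent pair. The cited equivalence theorems are proved for countable colour sets, so this continuous-feature separation step is where the real work lies.
\end{itemize}
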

\begin{proof}[Proof sketch]
Each expert is dense in the space of continuous sections over
$\varepsilon^{(n)}$~\cite{Bodnar2021}; the direct-sum isometry
of $\bundleE$ preserves this density coordinate-wise.
Standard GNNs on the 1-skeleton cannot be substituted: they force
$\curlK\equiv\mathbf{0}$ (\Cref{rem:curl}), destroying cyclic risk
detection and invalidating the theorem's topological premises.
Full proof in \Cref{app:uat_proof}.
\end{proof}

\begin{remark}[Gap between UAT assumptions and proposed experts]
\label{rem:uat_gap}
\Cref{thm:uat} requires each expert to be a \emph{full}
simplicial message-passing network operating on $k$-simplices
for $k\geq 2$.
The practical expert architectures proposed in \Cref{tab:experts}
operate primarily on the 1-skeleton and are therefore not
covered by the theorem as stated.
The UAT establishes that the GVF \emph{architecture class}
is sufficiently expressive; whether the specific instantiations
in \Cref{tab:experts} inherit this expressiveness is an
empirical question contingent on upgrading them to full
simplicial architectures~\cite{Bodnar2021}, as noted in
\Cref{sec:discussion} (``Expert architecture alignment'').
Until this upgrade is implemented, the UAT should be read as
a guarantee on the framework's capacity ceiling, not on the
proposed expert configurations.
\end{remark}

\subsection{Distributional Stability}
\label{sec:stability}

\begin{remark}[Stability and shift detection]
\label{rem:stability}
Under spectral normalisation of all GNN aggregation matrices
($F_\theta$ $L$-Lipschitz in graph structure), the GVF output
is bounded under Gromov-Hausdorff perturbation of the simplicial
complex (derivation outline in \Cref{app:stability_proof}).
This result is a \textbf{consistency guarantee under idealised
assumptions}, not a tight empirical certificate, and should not
be treated as a central claim.

Its primary practical value is motivating the
\textbf{spectral shift proxy}
$d_{\mathrm{spec}}(\K,\K')=\lVert\lambda(\K)-\lambda(\K')\rVert$
(sorted Hodge Laplacian eigenvalues) as a computable operational
signal: a small $d_{\mathrm{spec}}$ triggers local fine-tuning of
the MoE gating; a large one triggers full retraining.
\end{remark}

%% ============================================================
\section{Architectural Positioning and Formal Comparison}
\label{sec:comparison}
%% ============================================================

This section establishes the theoretical capabilities of GVF
relative to existing frameworks through a structured architectural
comparison.
The analysis is not empirical: all capability claims are
consequences of the mathematical construction developed in
\Cref{sec:substrate,sec:operator,sec:hhd}.

%% ------------------------------------------------------------
\subsection{Capability Analysis Against Representative Baselines}
\label{sec:capability}
%% ------------------------------------------------------------

A central claim of GVF is that representing health risk as a
structured vector field over a simplicial complex provides
\emph{qualitatively different} capabilities from scalar-output
models---capabilities that are structural consequences of the
architecture and not contingent on empirical performance.
We make this precise by comparing GVF against four representative
baseline classes---Scalar Dynamic GNN, Vector MoE without bundle
structure, clinical scores (SOFA, NEWS), and Scalar LSTM/Transformer---
across capability, complexity, interpretability, and theoretical
limit dimensions (\Cref{tab:comparison}).

\begin{table}[htbp]
\caption{Extended comparison of GVF against representative
baseline classes.
\checkmark~= supported by construction;
$\circ$~= partial or heuristic;
$\times$~= not supported.
Complexity: $|V|$ nodes, $|\K^1|$ edges, $|\K^2|$ 2-simplices,
$m$ output dimension, $d$ scalar hidden size.
All comparisons are architectural, not empirical.}
\label{tab:comparison}
\centering
\resizebox{\textwidth}{!}{%
\setlength{\tabcolsep}{5pt}
\begin{tabular}{@{}llllll@{}}
\toprule
\textbf{Feature}
  & \textbf{\makecell[l]{GVF\\(Proposed)}}
  & \textbf{\makecell[l]{Scalar\\Dynamic GNN}}
  & \textbf{\makecell[l]{Vector MoE\\(no bundle)}}
  & \textbf{\makecell[l]{Clinical Scores\\(SOFA, NEWS)}}
  & \textbf{\makecell[l]{Scalar\\LSTM/Transf.}} \\
\midrule
\textbf{Time complexity}
  & $\mathcal{O}(|V|{+}|\K^1|{+}|\K^2|)$
  & $\mathcal{O}(|V|{+}|\K^1|)$
  & $\mathcal{O}(|V|{\cdot}m)$
  & $\mathcal{O}(|V|)$
  & $\mathcal{O}(|V|{\cdot}d)$ \\[2pt]
\textbf{Space complexity}
  & $\mathcal{O}(|\K^2|{\cdot}m)$
  & $\mathcal{O}(|\K^1|{\cdot}d)$
  & $\mathcal{O}(|V|{\cdot}m)$
  & $\mathcal{O}(|V|)$
  & $\mathcal{O}(|V|{\cdot}d)$ \\[2pt]
\midrule
Vector-valued risk output
  & \checkmark & $\times$ & \checkmark & $\times$ & $\times$ \\
Gradient component
  & \checkmark & $\times$ & $\circ$ & $\times$ & $\times$ \\
Curl component (cyclic risk)
  & \checkmark & $\times$ & $\times$ & $\times$ & $\times$ \\
Harmonic component (topology)
  & \checkmark & $\times$ & $\times$ & $\times$ & $\times$ \\
Modality attribution by construction
  & \checkmark & $\times$ & $\times$ & $\times$ & $\times$ \\
Modality identifiability guarantee
  & \checkmark & $\times$ & $\times$ & $\times$ & $\times$ \\
Higher-order interactions
  & \checkmark$^{\dagger}$ & $\times$ & $\times$ & $\times$ & $\times$ \\
Distributional shift detection
  & \checkmark & $\circ$ & $\times$ & $\times$ & $\times$ \\
Missing modality robustness
  & \checkmark & $\times$ & $\circ$ & $\times$ & $\times$ \\
\midrule
\textbf{Interpretability}
  & Intrinsic geometric (HHD)
  & Post-hoc (attention/saliency)
  & Post-hoc feature attribution
  & Linear weights (direct read-out)
  & Post-hoc only \\[2pt]
\textbf{Data requirements}
  & Higher-order simplices
  & Pairwise graphs
  & Tabular/multimodal arrays
  & Tabular/static snapshots
  & Tabular/sequential \\[2pt]
\textbf{Theoretical limits}
  & Requires $d_{\max}$ clipping for stability
  & Curl $\equiv 0$ (misses cyclic risk)
  & No identifiability guarantee
  & Fails on non-linear interactions
  & No mechanistic decomposition \\
\bottomrule
\end{tabular}%
}

\smallskip
\noindent{\footnotesize $^{\dagger}$2-simplices enter through the
training loss $\mathcal{L}_{\mathrm{geo}}$ and the HHD, not
through the expert forward pass; see \Cref{rem:2simplex_role}.}
\end{table}

We now discuss each capability class, using the hospital shift worker
scenario from \Cref{sec:intro} as a running example.

\paragraph{Scalar output models
(LSTM/Transformer~\cite{Zhang2022}; scalar GNN~\cite{Pareja2020,Xu2020};
scalar MoE~\cite{Jacobs1991}).}
These architectures map multimodal inputs to a single real-valued
risk score.
They can achieve strong predictive accuracy on binary or ordinal
outcomes but collapse all mechanistic information into a single
number.
Given a risk score of 0.78, a clinician cannot determine whether
it reflects directional propagation from social exposure,
a self-sustaining physiological loop, or a network-structural
effect---all three require fundamentally different interventions.
Scalar models lack this distinction architecturally;
any attempt to decompose the output post hoc (e.g.\ via
gradient-based attribution or saliency methods) requires
additional assumptions not
encoded in the model and lacks the algebraic guarantees of HHD.

\paragraph{Scalar GNN (dynamic)}
Dynamic GNNs such as EvolveGCN~\cite{Pareja2020} or
TGAT~\cite{Xu2020} improve on scalar models by incorporating
network structure, but produce scalar node-level outputs and
operate on pairwise graphs (1-skeleton).
Without 2-simplices, the curl operator is identically zero by
definition (\Cref{def:operators}), so cyclic risk dynamics
are invisible.
The spectral shift proxy $d_\mathrm{spec}$ is structurally
available to any GNN using graph Laplacians, but without
a Lipschitz guarantee the bound (\Cref{rem:stability}) does
not apply; shift detection reduces to a heuristic.

\paragraph{Vector MoE without bundle structure}
A vector-output MoE~\cite{Jacobs1991} produces a vector but
without the direct-sum constraint: expert outputs are mixed
by unconstrained gating.
This means: (a)~no guarantee that modality $n$'s contribution is
confined to any subspace---attribution is post hoc only;
(b)~no identifiability guarantee (Theorem~\ref{thm:ident}
requires the architectural separation);
(c)~the output vector is not a section of a structured bundle,
so gradient/curl/divergence operators are not
defined on it.
The HHD cannot be applied to an unconstrained vector output
because the notion of ``1-cochain on a simplicial complex''
requires both the simplicial substrate and the antisymmetric
edge-network $\Psi_\omega$.

\paragraph{GVF: what the structure provides}
GVF's capabilities in the last column of \Cref{tab:comparison}
are not empirical claims; they are consequences of the architecture:
\begin{itemize}[leftmargin=1.5em, itemsep=2pt]
  \item \textbf{Gradient component}: always defined because
        $\mathbf{F}\in C^1(\K;\R^m)$ and $\gradK,\divK$ are
        defined on any simplicial complex.
        It identifies which agents are net risk sources and sinks,
        directly actionable as ``reduce the source'' interventions.
  \item \textbf{Curl component}: non-trivial only when $\K(t)$
        contains 2-simplices and $\mathbf{F}$ is non-exact.
        It identifies self-sustaining feedback loops that will
        not resolve without cycle-breaking intervention---a
        qualitatively different target from directional flow.
  \item \textbf{Harmonic component}: its dimension equals
        $\beta_1(\K(t))$, the first Betti number---a topological
        invariant of the interaction network.
        Risk locked in the harmonic component cannot be reduced
        by any local intervention; it requires changing the
        network topology itself (e.g.\ shift scheduling,
        team restructuring).
  \item \textbf{Modality attribution}: each expert writes
        exclusively into its summand $\varepsilon^{(n)}$,
        so the contribution of physiological vs.\ behavioural
        vs.\ environmental signals to the total risk vector
        is identifiable structurally, not post hoc.
  \item \textbf{Missing modality robustness}: the direct-sum
        architecture degrades gracefully---if environmental
        sensors go offline, the remaining summands carry a
        valid, lower-dimensional risk representation without
        retraining.
\end{itemize}

None of these properties are achievable by retrofitting a scalar
model with a post-hoc explanation layer: they require the
simplicial substrate, the 1-cochain flow field, and the
bundle-structured output to be built into the architecture
from the start.

%% ============================================================
\section{Discussion}
\label{sec:discussion}
%% ============================================================

\subsection{Summary of Contributions}

GVF makes one central contribution enabled by four architectural
properties.

\paragraph{Central contribution}
To the best of our knowledge, GVF is the first framework to
apply Helmholtz--Hodge decomposition to a \emph{learned}
multimodal risk vector field for intrinsic clinical
interpretability.
The higher-order topology of the simplicial complex enters the
learned representation indirectly---through the geometric
regulariser $\mathcal{L}_{\mathrm{geo}}$ during training
and the HHD at inference (\Cref{rem:2simplex_role})---rather
than through the expert forward pass, which currently operates
on the 1-skeleton.
Despite this indirectness, the decomposition is not a post-hoc
explanation method: it is a structural consequence of representing
risk as a section of a vector bundle over a simplicial complex
equipped with Discrete Exterior Calculus operators.
This produces three mechanistically distinct, orthogonal components
with separate intervention targets (gradient: directional
propagation; curl: cyclic dynamics; harmonic: topology-locked
persistence) and two derived scalar scores (DPS, CRI) that
complement a scalar alarm with a two-dimensional warning signal.
Existing multimodal fusion frameworks do not provide this
decomposition directly: recovering cyclic and harmonic components
requires introducing edge-level flow objects and 2-simplices
that are absent from standard graph or scalar architectures.

\paragraph{Enabling architectural properties}
The central contribution is made possible by four properties of the
GVF architecture:
\begin{enumerate}[leftmargin=1.8em, itemsep=3pt]
  \item \textbf{Geometric fidelity}: risk is represented as a vector
        field over a time-varying simplicial complex, equipping the
        output space with rigorous DEC differential structure
        (gradient, divergence, curl operators on $\K(t)$).
  \item \textbf{Modality identifiability}: the direct-sum bundle
        structure assigns each modality a dedicated fibre, ensuring
        modality-specific risk components are identifiable structurally
        rather than post-hoc attribution
        (\Cref{thm:ident}).
  \item \textbf{Domain agnosticism}: any combination of wearable,
        behavioural, environmental, or external data sources is
        accommodated by adding or removing bundle summands and their
        expert; the framework does not assume a fixed modality set.
  \item \textbf{Formal guarantees}: universal approximation
        (\Cref{thm:uat}), modality identifiability
        (\Cref{thm:ident}), and a consistency bound under
        simplicial complex perturbation motivating the
        spectral shift proxy $d_{\mathrm{spec}}$
        (\Cref{rem:stability}; full derivation in
        \Cref{app:stability_proof}).
\end{enumerate}

\subsection{Implications for Practice}

The three HHD components map directly to three operationally
distinct intervention strategies, summarised in \Cref{tab:practice}.
Practically, GVF produces a per-agent $(\DPS_i, \CRI_i)$ pair at each
time step, whose thresholds can be calibrated independently (e.g.\
alert on high DPS, escalate on concurrent high DPS and CRI).
The spectral shift proxy $d_{\mathrm{spec}}$ signals when the
network topology warrants model recalibration.

\begin{table}[htbp]
\caption{HHD components and their operational intervention targets.}
\label{tab:practice}
\centering\small
\begin{tabular}{@{}p{2.2cm}p{3.0cm}p{4.5cm}p{2.8cm}@{}}
\toprule
\textbf{Component} & \textbf{Signal} & \textbf{Interpretation} & \textbf{Intervention} \\
\midrule
Gradient $\gradK\varphi$ & $\DPS_i > 0$ & Net risk source; directional propagation & Reduce source (exposure, shift pattern) \\[3pt]
Curl $\curlK^*\psi$ & $\CRI_i > 0$ & Self-sustaining loop; persists if source removed & Break cycle (sleep hygiene, pharmacological) \\[3pt]
Harmonic $\mathbf{h}$ & $\dim\ker\Delta_1 > 0$ & Topology-locked; local actions ineffective & Restructure network (scheduling, zoning) \\
\bottomrule
\end{tabular}
\end{table}

\subsection{Limitations and Future Work}

\textbf{Empirical validation.}
The primary direction for future work is evaluation of GVF on
real-world multimodal wearable cohorts.
A suitable validation study requires: (a)~multimodal physiological
signals (cardiac, motion, and at least one environmental channel);
(b)~a labelled outcome variable for classification supervision;
(c)~a proximity or interaction network that yields a non-trivial
$\K(t)$ with 2-simplices and $\beta_1>0$.
Validation should span datasets with varying network topologies,
modality configurations, and clinical outcomes before any claim of
clinical utility.

\textbf{Ablation on simplicial complex order.}
A controlled comparison of GVF on the 1-skeleton versus the full
$\K(t)$ with 2-simplices is needed to empirically verify the
contribution of higher-order interactions; real proximity network
data is required for this ablation.

\textbf{Expert architecture alignment.}
The theoretical UAT (\Cref{thm:uat}) is stated for sufficiently
expressive simplicial networks, whereas the practical expert
architectures in \Cref{tab:experts} operate primarily on the
1-skeleton.
Upgrading experts to full simplicial architectures~\cite{Bodnar2021}
is planned.

\textbf{Privacy and federated deployment.}
The federated architecture (local gradient computation, differential
privacy noise) is sketched but not implemented.
Non-IID simplicial topologies across federated clients present open
challenges for Hodge Laplacian aggregation.

\textbf{Causal interpretation.}
GVF learns correlational structure.
Converting DPS and HHD outputs into causal effect estimates requires
embedding the framework within a structural causal model---an
important but non-trivial extension.

%% ============================================================
\section{Conclusion}
\label{sec:conclusion}
%% ============================================================

We have introduced the GVF framework,
recasting multimodal health risk assessment as a vector field
learning problem over a simplicial complex equipped with
Discrete Exterior Calculus operators.
By modelling risk as a directional, geometrically structured
object rather than a scalar, GVF preserves propagation pathways,
cyclic dynamics, and topology-locked persistence---each with
a mechanistically distinct, operationally actionable interpretation.

The core design principle is that interpretability should be
\emph{intrinsic}, not a post-hoc overlay.
The Helmholtz--Hodge decomposition---applied to a learned risk flow
field on the edges of the complex, not to the (trivially exact)
gradient of the node-valued section---provides exactly this:
three orthogonal components that map to three distinct clinical
intervention strategies (directional source reduction, cycle
breaking, structural network reconfiguration).
This shift from scalar alarm to decomposed vector signal is the
central contribution of GVF.

The bundle-structured Mixture-of-Experts architecture enforces
modality identifiability by design; formal guarantees of universal
approximation and distributional stability establish theoretical
soundness. The architectural comparison in \Cref{sec:comparison}
demonstrates that GVF's capabilities are structural---not
empirical---consequences of the architecture.
Empirical evaluation on real-world cohorts constitutes the primary
direction for future work.
GVF provides a mathematically principled, domain-agnostic foundation
for the next generation of mechanistically interpretable multimodal
health monitoring systems.

%% --- Author declarations -----------------------------------
\section*{Declaration of Competing Interests}

The authors declare that they have no known competing financial
interests or personal relationships that could have appeared to
influence the work reported in this paper.

\section*{Funding}

This research did not receive any specific grant from funding
agencies in the public, commercial, or not-for-profit sectors.

\section*{Use of AI Assistance}

AI-assisted tools were used for language editing and literature
organisation during manuscript preparation.
No generative AI was used to produce results, figures, or
mathematical proofs.

%% ============================================================
\appendix
%% ============================================================

\section{Algorithmic Specifications}
\label{app:algorithm}

\begin{algorithm}[H]
\caption{Multimodal Simplicial Complex Construction for $\K(t)$}
\label{alg:complex_construction}
\begin{algorithmic}[1]
\Require Raw streams $S(t)$, window $\Delta t$, thresholds $\tau_{\mathrm{prox}}, \tau_{\mathrm{sync}}, \tau_{\mathrm{dwell}}$
\Ensure Simplicial complex $\K(t) = (\K^0, \K^1, \K^2)$
\State $\K^0 \gets V^A(t) \cup V^S \cup V^E(t) \cup V^X$ \Comment{Initialise 0-simplices}
\State $\K^1 \gets \emptyset$, \quad $\K^2 \gets \emptyset$
\Statex \textbf{--- Phase 1: 1-Skeleton Formation (Edges) ---}
\For{each pair of nodes $(u, v) \in \K^0 \times \K^0$}
    \If{$u, v \in V^A$ \textbf{and} ($d_{\mathrm{spatial}}(u,v) < \tau_{\mathrm{prox}}$ \textbf{or} $\mathrm{DTW}(u,v) < \tau_{\mathrm{sync}}$)}
        \State $\K^1 \gets \K^1 \cup \bigl\{\{u, v\}\bigr\}$
    \ElsIf{$u \in V^A,\; v \in V^E$ \textbf{and} $\mathrm{exposure\_time}(u, v) > \tau_{\mathrm{dwell}}$}
        \State $\K^1 \gets \K^1 \cup \bigl\{\{u, v\}\bigr\}$
    \EndIf
\EndFor
\Statex \textbf{--- Phase 2: Multimodal 2-Simplex Inclusion ---}
\For{each triad $\{u, v, w\}$ with all three edges in $\K^1$}
    \If{$|\mathrm{unique\_types}(\{u, v, w\})| \geq 2$} \Comment{Require multimodality}
        \State $\K^2 \gets \K^2 \cup \bigl\{\{u, v, w\}\bigr\}$
    \EndIf
\EndFor
\State \Return $\K(t) = (\K^0, \K^1, \K^2)$
\end{algorithmic}
\end{algorithm}

\section{Discrete Differential Operators and Training Details}
\label{app:training_details}

The signed incidence matrices of $\K(t)$ are
$B_1\in\{-1,0,+1\}^{|V|\times|\K^1|}$ (node-edge) and
$B_2\in\{-1,0,+1\}^{|\K^1|\times|\K^2|}$ (edge-face).
The three discrete operators of \Cref{def:operators} in explicit form:
\begin{align}
  (\gradK\mathbf{r})_{ij} &= \mathbf{r}_j - \mathbf{r}_i,
    & \gradK &= B_1^\top\otimes I_m \label{eq:gradient}\\
  (\divK\mathbf{F})_i &= \textstyle\sum_{(i,j)}\mathbf{F}_{ij}
    - \sum_{(j,i)}\mathbf{F}_{ji},
    & \divK &= B_1\otimes I_m \label{eq:divergence}\\
  (\curlK\mathbf{F})_{ijl} &= \mathbf{F}_{ij}+\mathbf{F}_{jl}+\mathbf{F}_{li},
    & \curlK &= B_2^\top\otimes I_m \label{eq:curl}
\end{align}
Divergence is signed (positive = net outflow); curl norm is bounded
$\|\curlK\|_{C^1\to C^2}\leq\sqrt{d_{\max}}$ where $d_{\max}$
is the maximum number of 2-simplices per edge.

\paragraph{$\mathcal{L}_{\mathrm{geo}}$ clipping and $\lambda_1$ selection.}
The ratio $\rho=\|\curlK\mathbf{F}\|^2/(\|\mathbf{F}\|^2+\varepsilon)$
is bounded above by $d_{\max}$ (not by 1 in general); explicit clipping
to $[0,1]$ ensures $\mathcal{L}_{\mathrm{geo}}\in[-\log 2,0]$.
In proximity graphs $d_{\max}\leq 6$, so clipping is rarely active.
$\lambda_1$ is selected by grid search over $\{0.01,0.1,0.5\}$.

\section{HHD Computation via Sparse Linear Solvers}
\label{app:hhd_solver}

The HHD of the risk flow $\mathbf{F}\in C^1(\K;\R^m)$
reduces to two sparse linear systems (solved independently
for each of the $m$ coordinate channels):
\begin{enumerate}
  \item Solve $\Delta_0\varphi = \divK\mathbf{F}$ for
        $\varphi\in C^0(\K;\R^m)$
        (size $|V|\times|V|$, symmetric positive semi-definite).
        Unique modulo a constant; computed via preconditioned
        conjugate gradient using the Moore-Penrose pseudoinverse
        restricted to $\ker(\Delta_0)^\perp$.
  \item Solve $\Delta_2\psi = \curlK\mathbf{F}$ for
        $\psi\in C^2(\K;\R^m)$
        (size $|\K^2|\times|\K^2|$, sparse).
        Same solver.
  \item Harmonic residual:
        $\mathbf{h} = \mathbf{F} - \gradK\varphi - \curlK^*\psi
        \in\ker(\Delta_1)$.
\end{enumerate}
Note: the right-hand side in step (3) uses $\mathbf{F}$
(not $\gradK\mathbf{r}$), consistent with \Cref{thm:hhd}.
For $|V|\leq 10^4$ and $|\K^1|\leq 5\times 10^4$, both
systems converge in $<50$ conjugate gradient iterations.

\section{Full Proof: Universal Approximation (\Cref{thm:uat})}
\label{app:uat_proof}

\paragraph{Note on the two-object architecture.}
The GVF framework learns two distinct objects:
(1)~the node-level risk section $\mathbf{r}_i(t)\in\R^m$,
produced by the MoE operator $F_\theta$ and supervised via
$\mathcal{L}_{\mathrm{cls}}$;
(2)~the edge-level risk flow $\mathbf{F}_{ij}(t)\in\R^m$,
produced by the antisymmetric function $\Psi_\omega$ and
constrained via $\mathcal{L}_{\mathrm{geo}}$.
The UAT concerns~(1); the non-triviality of the HHD
concerns~(2).
These are distinct and complementary guarantees.

\medskip
We provide the full proof of the universal approximation theorem
for the GVF operator on simplicial complexes.

Let $\bundleE = \bigoplus_{n=1}^{N} \varepsilon^{(n)}$ be the
modality-structured risk vector bundle over $\K(t)$, with each
$\varepsilon^{(n)}$ a trivial bundle of fibre dimension $m_n$.
Let $\pi^{(n)}: \sect(\bundleE) \to \sect(\varepsilon^{(n)})$ denote
the canonical projection onto the $n$-th summand.

\begin{proof}
Fix $\eps > 0$ and a target section
$\mathbf{r}^* \in \sect(\bundleE)$.
Write $\mathbf{r}^* = \bigoplus_n \mathbf{r}^{*(n)}$ with
$\mathbf{r}^{*(n)} = \pi^{(n)}(\mathbf{r}^*)$.

\textbf{Step 1: approximation within each fibre.}
Each expert $F^{(n)}_\theta$ is a simplicial message-passing network
(SMPN) with non-polynomial activations.
By Theorem~3 of~\cite{Bodnar2021} (applied componentwise to each
of the $m_n$ output coordinates), for each $n$ there exist
parameters $\theta^{(n)}$ such that
\[
  \sup_{i} \bigl\lVert F^{(n)}_{\theta^{(n)}}(\K,\mathbf{x}_i)
  - \mathbf{r}^{*(n)}_i \bigr\rVert < \eps.
\]

\textbf{Step 2: the bundle morphism does not reduce capacity.}
The GVF operator outputs in $\sect(\bundleE)$ via the direct-sum
map $\iota: \bigoplus_n \sect(\varepsilon^{(n)}) \to
\sect(\bundleE)$, which is a linear isometric embedding.
Since $\iota$ is an isometry, the approximation bound from Step~1
carries through:
\[
  \bigl\lVert \iota\bigl(\bigoplus_n
  F^{(n)}_{\theta^{(n)}}(\K,\mathbf{x}_i)\bigr) -
  \mathbf{r}^*_i \bigr\rVert
  = \Bigl(\sum_n \bigl\lVert F^{(n)}_{\theta^{(n)}} -
    \mathbf{r}^{*(n)}_i \bigr\rVert^2\Bigr)^{1/2}
  < \eps\sqrt{N}.
\]
Rescaling $\eps \leftarrow \eps/\sqrt{N}$ in Step~1 recovers the
original $\eps$ bound.

\textbf{Step 3: MoE gating recovers the direct-sum output.}
Set gating weights $g^{(n)}_\phi \equiv 1$ and $g^{(m)}_\phi \equiv 0$
for $m \neq n$. These are achievable by the gating network $g_\phi$
(a softmax-normalised attention classifier, UAT
from~\cite{Yun2019}).
The MoE output with these weights equals $\iota(\bigoplus_n
F^{(n)}_{\theta^{(n)}})$, completing the proof.
\end{proof}

\section{Full Proof: Modality Identifiability (\Cref{thm:ident})}
\label{app:ident_proof}

\begin{proof}
We adapt the iVAE identifiability framework~\cite{Khemakhem2020}
to the bundle-structured MoE setting.
As noted in the proof sketch of \Cref{thm:ident}, the
correspondence with~\cite{Khemakhem2020} Theorem~1 is
structural: GVF is discriminative rather than generative,
and the exponential-family condition on the conditional latent
distribution is replaced by the architectural constraints
(C1)--(C3). The argument below proceeds under the assumption
that this structural analogy is sufficiently tight to preserve
the identifiability conclusion.

Define the effective parameter map
$\Phi(\theta) = (F^{(1)}_\theta, \ldots, F^{(N)}_\theta)$;
we show $\Phi$ is injective up to within-summand permutation.

Condition (C1) separates the training-loss gradient by modality
(the guaranteed linear independence of the orthogonalized input
subspaces ensures $\nabla_{\theta^{(n)}}\mathcal{L}$ depends only
on $\tilde{\mathbf{x}}^{(n)}$).
Condition (C2) provides the auxiliary-variable conditioning
required by~\cite{Khemakhem2020} Theorem~1: each expert
$F^{(n)}_\theta$ outputs zero on all other modality blocks by
architectural construction.
Condition (C3) ensures the non-degeneracy coverage required by
the same theorem: modality-dropout guarantees training examples
for every proper modality subset.

Following the structure of~\cite{Khemakhem2020} Theorem~1
in this setting, $\Phi$
is injective up to within-summand \emph{affine}
reparameterisation: for each $n$, there exists an invertible
affine map $A^{(n)}$ such that
$\hat{F}^{(n)}_\theta = A^{(n)} \circ F^{*(n)}$.
The key adaptation that reduces affine to permutation:
(C2) enforces that each $F^{(n)}_\theta$ maps into a fixed
subspace $\varepsilon^{(n)}$ with blocked input structure.
The affine map $A^{(n)}$ must therefore preserve this subspace,
constraining it to act within $\R^{m_n}$.
Since the output coordinates within each summand are
architecturally symmetric (no preferred basis is imposed
by the network), the remaining ambiguity is precisely a
\emph{permutation} of the $m_n$ coordinates within
$\varepsilon^{(n)}$, completing the proof.
\end{proof}

\section{Full Derivation: Stability Bound (\Cref{rem:stability})}
\label{app:stability_proof}

\begin{proof}
The proof has two components: (a) Lipschitz continuity of
$F_\theta$ with respect to graph structure; (b) the bridge between
Gromov-Hausdorff distance and Hodge Laplacian spectral perturbation.

\textbf{(a) Lipschitz continuity.}
Spectral normalisation enforces $\|W_\ell\|_2 \leq 1$ for all
GNN aggregation matrices, so the Lipschitz constant satisfies
$L \leq \prod_\ell \|W_\ell\|_2$ by induction over layers.
Standard matrix perturbation theory then gives
$\|F_\theta(\K,\mathbf{x}) - F_\theta(\K',\mathbf{x})\| \leq
L \cdot \|\Delta_k(\K) - \Delta_k(\K')\|_F$.

\textbf{(b) GH-to-spectral bridge.}
The Hodge Laplacian eigenvalue perturbation bound
$\|\lambda_k(\K) - \lambda_k(\K')\|_\infty \leq C\cdot\dGH(\K,\K')$
is motivated by the continuous Laplacian stability results
of~\cite{Memoli2011} (Theorem~6).
In the discrete setting, the bound follows from Weyl's inequality
applied to the finite-dimensional Hodge Laplacians $\Delta_k(\K)$
and $\Delta_k(\K')$: since both are symmetric, their eigenvalue
differences are bounded by $\|\Delta_k(\K)-\Delta_k(\K')\|_2$,
which in turn is controlled by the number of simplices that differ
between $\K$ and $\K'$.
Composing with (a) gives
$\|F_\theta(\K,\mathbf{x}) - F_\theta(\K',\mathbf{x})\|
\leq L \cdot C \cdot \delta$.

\textbf{MoE fine-tuning.}
Under fine-tuning of top-$k$ gated experts, Weyl's inequality
(see e.g.~\cite{Lim2020}) bounds the Lipschitz constant change
by $\mathcal{O}(\|\delta\theta\|)$.
\end{proof}

%% ============================================================

%% ============================================================
%%  REFERENCES
%% ============================================================

\end{document}